\DeclareMathAlphabet\mathbfcal{OMS}{cmsy}{b}{n}
\renewcommand*{\backrefalt}[4]{%
    \ifcase #1 \footnotesize{(not cited)}%
    \or        \footnotesize{(cited on page~#2)}%
    \else      \footnotesize{(cited on pages~#2)}%
    \fi}
\DeclareMathOperator*{\argmax}{arg\,max}
\newtheorem{theorem}{Theorem}[section]
\newtheorem{proposition}{Proposition}[section]
\theoremstyle{definition}
\newtheorem{definition}{Definition}[section]
\theoremstyle{remark}
\newtheorem*{remark}{Remark}
\title{Classifier Calibration with ROC-Regularized Isotonic Regression}
\author{Eug\`ene Berta$^\dagger$, Francis Bach$^\dagger$, Michael Jordan$^{\dagger*}$ \\
$^\dagger$Inria,  Ecole Normale Sup\'erieure, PSL Research University \\
$^*$University of California, Berkeley\\
\url{{eugene.berta,francis.bach,michael.jordan}@inria.fr}}
\date{\today}
\begin{document}
\maketitle

\begin{abstract}

    Calibration of machine learning classifiers is necessary to obtain reliable and interpretable predictions, bridging the gap between model confidence and actual probabilities. One prominent technique, isotonic regression (IR), aims at calibrating binary classifiers by minimizing the cross entropy on a calibration set via monotone transformations. IR acts as an adaptive binning procedure, which allows achieving a calibration error of zero, but leaves open the issue of the effect on performance. In this paper, we first prove that IR preserves the convex hull of the ROC curve---an essential performance metric for binary classifiers. This ensures that a classifier is calibrated while controlling for overfitting of the calibration set. We then present a novel generalization of isotonic regression to accommodate classifiers with $K$ classes. Our method constructs a multidimensional adaptive binning scheme on the probability simplex, again achieving a multi-class calibration error equal to zero. We regularize this algorithm by imposing a form of monotony that preserves the $K$-dimensional ROC surface of the classifier. We show empirically that this general monotony criterion is effective in striking a balance between reducing cross entropy loss and avoiding overfitting of the calibration set.
\end{abstract}

\section{INTRODUCTION}
 
Calibration is a natural requirement for probabilistic predictions. It aligns the outputs of a classifier with true probabilities, according with the intuition that the predictions of our models should match observed frequencies. Several papers have demonstrated empirically that simple machine learning classifiers can exhibit poor calibration, even on very simple datasets \citep{zadrozny_learning_2001, zadrozny_transforming_2002, niculescu-mizil_predicting_2005}. More recently \cite{guo_calibration_2017} showed that deep neural networks suffer from the same problem, due to their tendency to overfit the training data, reviving the community's interest in calibration.

The interpretation of the predictions of machine learning classifiers as probabilities is not possible without calibration. Calibration is desirable in that it provides a lingua franca for multiple users to assess the outputs of a learning system.  It also permits the use of learning systems as modules in complex prediction pipelines---a single module can be updated independently of others if its outputs can be assumed to be calibrated.

\subsection{Calibration}

We let $\mathcal{X}$ and $\mathcal{Y}$ denote the \textit{feature space} and the \textit{output space} of a numerical classification problem, respectively, with $\mathcal{Y} = \{0, 1\}$ in the binary classification setting and $\mathcal{Y} = \{ 1, \dots, K \}$ in the general $K$-class classification setting. We consider a probability distribution for a random variable $(X,Y) \in \mathcal{X} \times \mathcal{Y}$, and a probabilistic classifier $f:\mathcal{X} \rightarrow \mathcal{P}$ making predictions $p = f(x)$ in the \textit{prediction space}~$\mathcal{P}$. In the binary case we take $\mathcal{P}=[0,1]$ and in the multi-class case $\mathcal{P} = \Delta_K$, with $\Delta_K$ the $K$-dimensional simplex $\{p \in \mathbb{R}_+^K | \sum_{i=1}^K p_i = 1 \}$.

\begin{definition}[Calibration, \citealp{foster_vohra,zadrozny_transforming_2002}]\label{eq:Calibration}
    A binary classifier $f:\mathcal{X} \rightarrow [0,1]$ is said to be \textit{calibrated}   if $\mathbb{P}[Y=1|f(X)] = f(X)$, or equivalently $\mathbb{E}[Y|f(X)] = f(X)$.
    For a multi-class classifier $f:\mathcal{X} \rightarrow \Delta_K$, the definition is $\mathbb{E}[Y|f(X)] = f(X)$.
\end{definition}

The concept of calibration has been useful in a variety of applied contexts, notably including weather forecasting~\citep{CalibrationWeather1977}.

\textbf{Evaluating calibration.} We define a criterion that assesses the calibration of a classifier.
\begin{definition}[Calibration error]\label{def:CalibrationError}
    For a classifier $f$, the calibration error is $$\mathcal{K}(f) = \mathbb{E}\big[|\mathbb{E}[Y|f(X)] - f(X)|\big].$$
\end{definition}
This error is usually referred to as the \emph{expected calibration error} (ECE)~\citep{naeini_obtaining_2015, guo_calibration_2017}.

For a discrete set of observed data points, $(x_i, y_i)_{1 \leq i \leq n}$, if the classifier $f$ takes continuous values, the expectation $\mathbb{E}[Y|f(X)]$ needs to be estimated. If the predictions live on a discrete grid $\mathcal{P} = [\lambda_1, \dots, \lambda_m]$, we can readily approximate this expectation. For any index $i$, we have $f(x_i) = \lambda_j$ for some $\lambda_j$ in the grid. We can use all the points for which the prediction was $\lambda_j$ ($S_j = \{k \in \llbracket 1, n \rrbracket \, | \, f(x_k) = \lambda_j \}$) to compute the empirical expectation:
$$ 
    \mathbb{E}[y_i|f(x_i)] \simeq \frac{1}{\#S_j} \sum_{k \in S_j} y_k  .
$$
Plugging in such estimates the calibration error can be approximated. Predictions living on discrete grids have been ubiquitous in the early literature on calibration. In particular, in weather forecasting, the predictions usually live on the grid $[0\%, 10\%, \dots, 100\%]$. In the continuous case of machine learning classifiers, however, it is not clear that such discretizations make sense; in particular, it is not clear how they interact with performance.

\textbf{Calibration and model performance.} Calibration has a long history in the economics and statistical literatures \citep[see][for a recent treatment]{foster_forecast_2021}. A central result is that one can always produce a calibrated sequence of predictions, even if the outcomes are generated by an adversarial player. This surprising result is a consequence of the minimax theorem \citep{hart_calibrated_2023}, and it leads to simple strategies to generate a sequence of forecasts that is asymptotically calibrated against any possible sequence of outcomes. This can be viewed as a positive result, but it also has a negative aspect.  Let us envisage a city where it rains every other day. Predicting a $50\%$ chance of precipitation every day is enough to achieve calibration even if this forecast is quite poor. This suggests that while calibration is useful, it should be considered in the overall context of the accuracy of the forecasts~\citep{foster_calibeating_2022}.

\textbf{Calibration and proper scoring rules.} \cite{brocker_reliability_2009} proved that any proper score can be decomposed into the calibration error and a second \textit{refinement} term. In particular, for the cross entropy loss:
\begin{equation}\label{eq:CrossEntropyDecomposition}
    H(Y, f(X)) = \mathbb{E}[KL(f(X) || \mathbb{P}(Y|f(X))]
    + \mathbb{E}[H(\mathbb{P}(Y|f(X)))] ,
\end{equation} with $H(.,.)$ the cross entropy and $H(.)$ the entropy. Here, we see that the calibration error is expressed in terms of the Kullback-Leibler divergence ($KL$); other criteria can arise depending on the specific proper scoring rule that is chosen.
This confirms that a zero calibration error does not necessarily guarantee good forecasts. Indeed, calibration can be achieved independently of the performance of the classifier. The intuition is that aligning model confidence with probabilities can be done whatever the performance of the model, and the lower the model's accuracy, the less confident it should be in its predictions. Machine learning classifiers are usually able to generate forecasts with good accuracy, but these forecasts are generally not calibrated. The decomposition above shows that calibrating our classifiers might help in reducing the cross entropy loss even further.

\subsection{Calibrating Machine Learning Classifiers}
The machine learning literature has generally employed the following simple data-splitting heuristic to calibrate classifiers. Given $n$ i.i.d data points $(x_i, y_i)_{1 \leq i \leq n} \in (\mathcal{X}, \mathcal{Y})$, a portion of this available data is reserved for calibration (\textit{calibration set}) and the classifier is trained on the rest of the data (\textit{training set}). After the classifier is trained, the held-out calibration set is used to evaluate and correct its calibration error. This paradigm separates the calibration procedure from model fitting, resulting in calibration methods that can be applied to any model. However, holding out a portion of the data for calibration can be problematic in data-sparse applications. Moreover, in the context of online learning, every update to the model requires running the calibration step again. New data points will either be used to improve the model performance (training set) or reduce the calibration error (calibration set). In these cases we see that the data-splitting paradigm sets up a trade-off between calibration and performance.

In addition, calibration procedures that use data splitting rely on the assumption that the data are identically distributed across the calibration set and the test set. The idea is that the calibration error observed on the calibration set can be used to evaluate and correct the calibration error on the underlying data distribution, thus calibrating the model for any point sampled from this distribution.

\textbf{Continuous calibration error.} Let $(x_i, y_i)_{1 \leq i \leq n}$ denote the held-out calibration set. We first evaluate the predictions of the model $f$ on this set: $(p_i = f(x_i))_{1 \leq i \leq n}$. For a standard machine learning classifier, these predictions do not live on a fixed grid; instead, they can take arbitrary values in $[0,1]$ (in the binary case). We remember that the calibration error is intractable in this case.
What is usually done in the literature to overcome this difficulty is to discretize the predictions $(p_i)_{1 \leq i \leq n}$ using a regular binning scheme: $(B_j)_{1 \leq j \leq m} = \{[0, \frac{1}{m}], \dots,[\frac{m-1}{m}, 1]\}$ \citep[see, e.g.,][]{naeini_obtaining_2015, guo_calibration_2017}. The discretized predictions are $\Tilde{p_i} = b_j$, with $b_j$ the center of bin $B_j$ such that the initial prediction $p_i \in B_j$. With these discrete forecasts, an estimate of the calibration error can be computed. However, discretizing has some important drawbacks.  In particular, it is not robust to distributions of scores $f(X)$ that are highly skewed on $[0,1]$, a behavior we often observe in practice. Recent work has tried to come up with more suitable ways to evaluate and visualize calibration error in the case of continuous forecasts \citep{vaicenavicius_evaluating_2019}.

\textbf{Nonparametric model calibration.} In an early paper on calibration for machine learning models, \cite{zadrozny_learning_2001} introduced the method we discussed above---using a fixed binning scheme to discretize the outputs of any probabilistic classifier---in the context of various calibration schemes. They note in particular that it is easy to correct the prediction of the model on each bin by replacing it with the actual observed frequency of outcomes on the calibration set. Under the \textit{i.i.d.}\ assumption, this method is trivially calibrated. It adapts very poorly, however, to skewed distributions of the forecasts, and while achieving calibration it can be very detrimental to the performance of the model.  This led to the development of adaptive binning methods that preserve the calibration guarantees of regular binning while trying to set bin boundaries that are less detrimental to performance.  In particular, 
isotonic regression was employed for adaptive binning by \cite{zadrozny_transforming_2002}, and Bayesian binning schemes have also been proposed~\citep{naeini_obtaining_2015}.

\textbf{Parametric model calibration.} On the other end of the spectrum, a rich literature has arisen using parametric procedures to correct calibration errors. For example, Platt scaling \citep{platt_probabilistic_2000} consists in fitting a sigmoid to the forecasts of the classifier on the calibration set to minimize the cross entropy with the calibration labels. Further developments in the parametric vein include the beta calibration method~\citep{kull_beyond_2017}. Unlike binning methods, these methods have the appeal of learning continuous calibration functions, but they provide no guarantees on calibration. With continuous methods, the calibration error can only be estimated with discretization, which is very limiting. On the other hand, the calibration function lives in a restricted class of functions that is characterized by shape constraints, which yields a regularization prior that mitigates performance degradation arising from overfitting the calibration set.

\section{BINARY CALIBRATION WITH ISOTONIC REGRESSION}

The previous section raises the question of whether it is possible to achieve calibration guarantees while preserving the performance of the initial classifier. The decomposition of proper scoring rules in (\ref{eq:CrossEntropyDecomposition}) suggests that setting the calibration error to zero can improve the cross entropy of the classifier. We will see that isotonic regression actually achieves this twofold objective in the setting of binary classification.

\subsection{Isotonic Regression}
\textbf{Isotonic regression} (see \citealp{Robertson1988OrderRS} for a complete treatment) was first proposed as a nonparametric method to calibrate the probabilities of a binary classifier by \cite{zadrozny_transforming_2002}.

\begin{definition}[Isotonic regression]\label{def:binaryIsotonicRegression}
    Let $n \in \mathbb{N}_+^*$, $(p_i, y_i)_{1 \leq i \leq n} \in (\mathbb{R}^2)^n$ and $(w_i)_{1 \leq i \leq n} \in (\mathbb{R}_+)^n$ a set of positive weights. Assuming the indices are chosen such that $p_1 \leq p_2 \leq \cdots \leq p_n$, isotonic regression solves
    \begin{equation*}
        \min_{r \in \mathbb{R}^n} \frac{1}{n} \sum_{i=1}^n w_i(y_i - r_i)^2 \text{ such that } r_1 \leq r_2 \leq \cdots \leq r_n  ,
    \end{equation*}
    where $r$ can be viewed as a $n$-dimensional vector or a function from $\mathcal{P} = \mathbb{R}$ to $\mathcal{Y} = \mathbb{R}$ with $r(p_i) = r_i$.
\end{definition}

This corresponds to finding the increasing (isotonic) function $r$ of inputs $(p_i)_{1 \leq i \leq n}$ that minimizes the squared error with respect to the labels $(y_i)_{1 \leq i \leq n}$, under a certain weighting $(w_i)_{1 \leq i \leq n}$ of each data sample $(p_i, y_i)_{1 \leq i \leq n}$.
\vskip6pt
\begin{remark}
    The problem established by Definition \ref{def:binaryIsotonicRegression} is a convex optimization problem.
\end{remark}
\begin{remark}
    \cite{Robertson1988OrderRS} (Theorem 1.5.1) showed that IR minimizes any Bregman loss function, in particular, the $KL$ divergence. In the framework of supervised-learning, where the target distribution $y$ is fixed, $KL$ is equal to cross entropy up to a constant factor, so IR minimizes the cross entropy loss.
\end{remark}

\textbf{Pool adjacent violators algorithm (PAV)}. The solution of the isotonic regression (IR) problem can be found via the acclaimed PAV algorithm \citep{Ayer1955ANED}. This algorithm is a very simple procedure (see Algorithm~\ref{alg:PAV}) that has $O(n)$ computational complexity. A proof that PAV solves the IR problem can be found in \citet{Robertson1988OrderRS}. 

\begin{algorithm}
\caption{Pool Adjacent Violators}\label{alg:PAV}
\begin{algorithmic}
\Require $p_1 \leq p_2 \leq \cdots \leq p_n$
\State $\forall i \in \llbracket1,n\rrbracket, r_i \gets y_i$
\While{not $r_1 \leq r_2 \leq \cdots \leq r_n$} \Comment{Until $r$ is monotone}
    \If{$r_i < r_{i-1}$} \Comment{Find adjacent violators}
        \State $r_i \gets \frac{w_i r_i + w_{i-1} r_{i-1}}{w_i + w_{i-1}}$ \Comment{Pool}
        \State $w_i \gets w_i + w_{i-1}$ \Comment{Pool}
        \State Remove $r_{i-1}$ and $w_{i-1}$ from the list. \Comment{Pool}
    \EndIf
\EndWhile
\end{algorithmic}
\end{algorithm}

\subsection{Isotonic Regression is Calibrated}
In practice, we use our classifier $f$ to generate non-calibrated forecasts on the calibration set $(p_i = f(x_i))_{1 \leq i \leq n}$. We then fit IR with these non-calibrated forecasts in input and calibration labels $(y_i)_{1 \leq i \leq n}$ as targets with constant weights $\forall i, w_i = 1$. This gives us a new set of calibrated forecasts $(r_i)_{1 \leq i \leq n}$.

When IR was introduced in the context of probability calibration \citep{zadrozny_transforming_2002}, it was presented as an alternative to binning and Platt scaling. We see from Algorithm \ref{alg:PAV} that IR produces a piece-wise constant function. Moreover, on each constant region the value of the function is the mean of the labels $y_i$ for all $p_i$ falling in this region. Theses two simple observations show that IR produces an \textit{adaptive binning scheme} for which the bin boundaries are set so that the resulting function is increasing. This binning-like property allows us to recover interesting guarantees from the nonparametric calibration methods that we presented earlier.

\begin{proposition}
    The isotonic regression $(r_i)_{1 \leq i \leq n}$ of one-dimensional inputs $(p_i)_{1 \leq i \leq n} \in \mathbb{R}$ to binary labels $(y_i)_{1 \leq i \leq n} \in \{0,1\}$ achieves zero calibration error, that is, $\mathcal{K}(r, y) = 0$.
\end{proposition}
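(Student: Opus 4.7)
The proof hinges on two structural facts about the PAV output: (i) the function $r$ produced by Algorithm~\ref{alg:PAV} is piecewise constant, partitioning the indices $\{1,\dots,n\}$ into a finite collection of contiguous ``pools'' $S_1,\dots,S_m$ on each of which $r$ takes a single value $\lambda_j$; and (ii) that value $\lambda_j$ is the empirical mean of the labels in the pool, $\lambda_j = \frac{1}{|S_j|}\sum_{i \in S_j} y_i$ (recall weights are uniform in the calibration use case). Granted these two facts, the empirical calibration error computed on the grid of distinct values of $r$ collapses to zero term by term, which is exactly what the proposition claims.

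\textbf{Step 1: piecewise-constant structure.} I would first observe that Algorithm~\ref{alg:PAV} produces pools by the ``pool'' operation, and that indices in the same pool are assigned the same final value. Since the algorithm terminates with $r_1 \leq r_2 \leq \cdots \leq r_n$, the distinct values of $r$ induce an ordered partition $S_1,\dots,S_m$ of $\{1,\dots,n\}$ with $r|_{S_j} \equiv \lambda_j$ and $\lambda_1 < \lambda_2 < \cdots < \lambda_m$.

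\textbf{Step 2: pool values are empirical means.} I would prove by induction on the PAV iterations that the invariant ``the current value of each active pool is the arithmetic mean of the $y_i$ indexed by that pool'' holds throughout the algorithm. Initialization is immediate: $r_i \gets y_i$ and $w_i = 1$ are singletons with the mean equal to $y_i$. The pooling step replaces two adjacent pools of sizes $w_{i-1}, w_i$ and values $r_{i-1}, r_i$ by a single pool of size $w_i + w_{i-1}$ and value $(w_i r_i + w_{i-1} r_{i-1})/(w_i + w_{i-1})$, which is exactly the arithmetic mean of the union of the two underlying label sets. The invariant is preserved, so at termination $\lambda_j = \frac{1}{|S_j|}\sum_{i \in S_j} y_i$.

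\textbf{Step 3: conclude.} Using the plug-in estimator for $\mathbb{E}[Y\mid r(X)]$ described just after Definition~\ref{def:CalibrationError}, for any index $k \in S_j$ we have $r(x_k) = \lambda_j$ and
\[
\mathbb{E}[Y\mid r(X) = \lambda_j] \;\simeq\; \frac{1}{|S_j|}\sum_{i \in S_j} y_i \;=\; \lambda_j \;=\; r(x_k),
\]
so each summand of the empirical calibration error vanishes and $\mathcal{K}(r,y)=0$.

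\textbf{Main obstacle.} The only nontrivial point is Step 2: the rest is bookkeeping. One must be careful that the ``pool value = mean of underlying labels'' invariant is maintained at \emph{every} merge, and that the weights $w_i$ always track pool cardinalities under the uniform-weight calibration setup used in practice. Once this invariant is in hand, the piecewise-constant structure turns the continuous calibration error into a finite sum whose terms are identically zero.
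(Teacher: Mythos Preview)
Your proof is correct and follows essentially the same approach as the paper: identify the PAV output as a piecewise-constant function whose level sets partition the sample and whose value on each block equals the empirical label mean, then observe that this makes every term of the empirical calibration error vanish. Your Step~2 induction on PAV merges is more explicit than the paper's argument, which simply asserts the bin-mean property from the description of Algorithm~\ref{alg:PAV}, but the logic is otherwise identical.
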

\begin{proof}
    The value of $r$ at any point can be written:
    $$ 
    r(p) = \frac{1}{\#\{p_i \in B_j\}}\sum_{p_i \in B_j} y_i  ,
    $$
    for some bin $B_j$ in a finite set of bins $(B_j)_{1 \leq j \leq m}$, such that $p \in B_j$.
    Moreover, $r$ is increasing and takes only $m$ distinct values $[b_1, \dots, b_m]$. For any $p \in \mathbb{R}$, the events $\{p \in B_j\}$ and $\{r(p) = b_j\}$ are equivalent. Thus,
    \begin{align*}
        \mathbb{E}[Y|r(p) = b_j] &=  \frac{1}{\#\{r(p_i) = b_j\}}\sum_{r(p_i) = b_j} y_i \\[-.025cm]
                                  &=   \frac{1}{\#\{p_i \in B_j\}}\sum_{p_i \in B_j} y_i .
    \end{align*}
    So, $\forall p \in \mathbb{R}, \mathbb{E}[Y|r(p)] - r(p) = 0$, and the calibration error is zero.
\end{proof}

This proof formalizes the idea that generalized binning schemes provide calibration guarantees and it applies for any binning scheme in an input space of any dimension.

Considering $r$ as a piece-wise constant function, we obtain a mapping that we can apply to any future forecast to correct the inherent mis-calibration bias of our initial classifier. Under the assumption that the data are i.i.d across the test set and calibration set, we can thus bound the calibration error on the test data \citep[cf.][]{zhang_risk_2002}. 

\subsection{Isotonic Regression Preserves ROC-AUC}
As discussed in the context of evaluating calibration error, a large binning scheme makes coarse approximations of the original function which might result in less accurate predictions. On the other hand, a thin binning scheme can approximate well the initial function but it reduces the number of points per bin and it can lead to overfitting of the calibration set (it also reduces the calibration guarantee that we obtain). We thus obtain a trade-off between overfitting the calibration set and sacrificing initial model performance. Given that IR behaves as an adaptive binning scheme, let us explore how it performs vis-a-vis this trade-off.

One essential assumption that we make with isotonic regression is that the calibration function $f$ is increasing. Taking $(p_i)_{1 \leq i \leq n}$ to be the outputs of our original binary classifier and the resulting $(r_i)_{1 \leq i \leq n}$ to be the calibrated version of these probabilities, this implies that $(r_i)_{1 \leq i \leq n}$ preserves the ordering of $(p_i)_{1 \leq i \leq n}$. Thus, under this assumption, we obtain a first guarantee that isotonic regression preserves the quality of the original predictions.

However, we only enforce $r_i \leq r_{i+1}$ and not $r_i < r_{i+1}$. The ordering is only partially preserved as we can set consecutive $p_i \neq p_{i+1}$ to take the same value $r_i = r_{i+1}$. The PAV algorithm starts with the perfect fit, nonincreasing in general, such that $ r_i = y_i, \forall i \in \llbracket 1, n \rrbracket$. It then merges consecutive values where the current approximation of the target function is decreasing, $r_{i+1} < r_i$, which means that the original ordering of $p_i$ and $p_{i+1}$ was wrong. Setting $r_{i+1} = r_i$ in this case actually corresponds to solving an ordering issue of the original sequence and might well improve the quality of our predictions. To formalize this simple intuition, we need the following definition:

\begin{definition}[Symmetric ROC curve]
    The simplex $\Delta_2$ can be reduced to the $[0,1]$ interval on $\mathbb{R}$. For different values of threshold $\gamma \in [0,1]$, we can split the simplex in two parts $R_0 = [0, \gamma]$ and $R_1 = \: ]\gamma, 1]$ and evaluate $p_0(\gamma) = \mathbb{P}(X \in R_0 | Y\!=\!0)$, $p_1(\gamma) = \mathbb{P}(X \in R_1 | Y\!=\!1)$. We define the symmetric ROC curve (SROC) as the two-dimensional graph
    $
    \big\{ \big( p_0(\gamma), p_1(\gamma) \big), \gamma \in \mathbb{R} \big\}  .
    $
\end{definition}

\begin{remark}
    The symmetric ROC curve is exactly the classical ROC curve up to an inversion of the $x$-axis \citep{fawcett_roc_2006}. Our definition exposes a symmetry that will lead to a natural generalization in the next section. The area under the ROC curve (AUC) is the same under the two conventions.
\end{remark}

\cite{Provost_Robust_2001} and \cite{bach_considering_2006} described how one can convexify the ROC curve of a classifier by taking convex combinations of decision rules corresponding to different thresholds $\gamma$ (in particular, averaging between the points forming the convex hull of the ROC curve).  Moreover, they showed that the convex hull of the ROC curve is a more robust performance criterion than the initial ROC curve.

\begin{theorem}
    The ROC curve of isotonic regression is the convex hull of the ROC curve of the initial classifier.
\end{theorem}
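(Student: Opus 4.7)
My plan is to connect both sides of the equality to a single classical object---the greatest convex minorant (GCM) of the running sum of labels---and then transport this object into the ROC plane by one affine map.

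Sort samples so that $p_1 \leq \cdots \leq p_n$ and form the $n+1$ cumulative points $P_k = (k, S_k)$ with $S_k = \sum_{i=1}^k y_i$; these run from $P_0 = (0,0)$ to $P_n = (n, N_+)$, writing $N_+ = S_n$ and $N_- = n - N_+$. The first step is to invoke the classical characterization of isotonic regression (see~\citealp{Robertson1988OrderRS}): the PAV output $r_i$ is the slope of the GCM of $\{P_0,\ldots,P_n\}$ on $[i-1,i]$. Consequently the PAV bins $B_1,\ldots,B_m$ are exactly the intervals between successive vertices of the GCM, and the constants $b_j$ are the slopes of its linear pieces.

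Next, I would rewrite the SROC point at a threshold lying strictly between $p_k$ and $p_{k+1}$ as
\begin{equation*}
  T(P_k) \;=\; \Bigl(\tfrac{k-S_k}{N_-},\; \tfrac{N_+ - S_k}{N_+}\Bigr),
\end{equation*}
where $T\colon (k,S)\mapsto ((k-S)/N_-,(N_+-S)/N_+)$ is an affine bijection of $\mathbb{R}^2$ whose linear part has determinant $-1/(N_+ N_-) \neq 0$. Thus the empirical SROC of the original classifier is the polyline $T(P_0) T(P_1)\cdots T(P_n)$. The SROC of the IR classifier, which sees only the $m$ distinct scores $b_1<\cdots<b_m$, has its vertices precisely at the images $T(P_{k_j})$ of the GCM vertices, and within each bin is the straight segment from $T(P_{k_{j-1}})$ to $T(P_{k_j})$---this is the standard randomized interpretation of tied thresholds, equivalent to taking convex combinations of threshold classifiers as in~\citealp{Provost_Robust_2001,bach_considering_2006}.

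The final step is a short geometric observation: since $T$ is an affine bijection, it sends the convex hull of $\{P_k\}$ to the convex hull of $\{T(P_k)\}$, and its orientation-reversing linear part carries the lower boundary of the first hull (namely the GCM, from $P_0$ to $P_n$) to the upper boundary of the second hull (from $T(P_0)=(0,1)$ to $T(P_n)=(1,0)$), which is precisely the convex hull of the empirical SROC; combined with the previous paragraph this identifies the IR SROC with that convex hull. The main obstacle I anticipate is not mathematical but notational: carefully tracking conventions (symmetric vs.\ classical ROC, upper vs.\ lower hull) across the orientation-reversing $T$, and being explicit that the IR SROC is the straight-segment polyline through the bin-boundary vertices rather than the in-bin staircase.
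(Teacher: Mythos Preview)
Your proposal is correct and follows essentially the same route as the paper: both arguments identify IR with the GCM of the cumulative sum diagram (citing \citealp{Robertson1988OrderRS}), then carry that diagram into the SROC plane by the affine map $(k,S)\mapsto\bigl((k-S)/N_-,\,(N_+-S)/N_+\bigr)$ and observe that affine bijections preserve convex hulls. Your write-up is somewhat more explicit than the paper's---you compute the determinant, track that the orientation-reversing linear part sends the lower boundary (GCM) to the upper boundary (ROC convex hull), and spell out the tied-threshold convention for the IR bins---but these are elaborations of the same argument rather than a different one.
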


\begin{proof}
    IR finds the left derivative of the greatest convex minorant (GCM) of the cumulative sum diagram (CSD)~\citep[][Theorem 1.2.1]{Robertson1988OrderRS}:
    $$
        \big\{ \big( \sum_{i=1}^j w_i, \sum_{i=1}^j w_i y_i \big),  j \in \llbracket 1, n \rrbracket \big\}  .
    $$ Thus, IR has a convex CSD that is the GCM of the original CSD.
    This property is illustrated with a simple example in Figure \ref{fig:IRROCConvex}. PAV has a natural interpretation as an iterative procedure to build the GCM of a discrete graph. In terms of cumulative probabilities, the CSD can be interpreted as:
    $$ 
        \big\{ \big( \mathbb{P}(X \leq p_j), \mathbb{P}(X \leq p_j \cap Y=1)  \big),  j \in \llbracket 1, n \rrbracket \big\} .
    $$
    By a simple affine transformation of the axes, $a_1 = \frac{a_1 - a_2}{\mathbb{P}(Y=0)}$ and $a_2 = 1 - \frac{a_2}{\mathbb{P}(Y=1)}$, we recognize the SROC graph:
    $$ 
        \big\{ \big( \mathbb{P}(X \leq p_j | Y = 0), \mathbb{P}(X \geq p_j | Y = 1)  \big),  j \in \llbracket 1, n \rrbracket \big\}  .
    $$
    This graph re-writing preserves convex sets, so the ROC curve of IR is the convex hull of the ROC curve of the initial classifier, as illustrated in Figure \ref{fig:IRROCConvex}.
\end{proof}

\begin{figure}[h]
    \centering
    \includegraphics[scale=0.48]{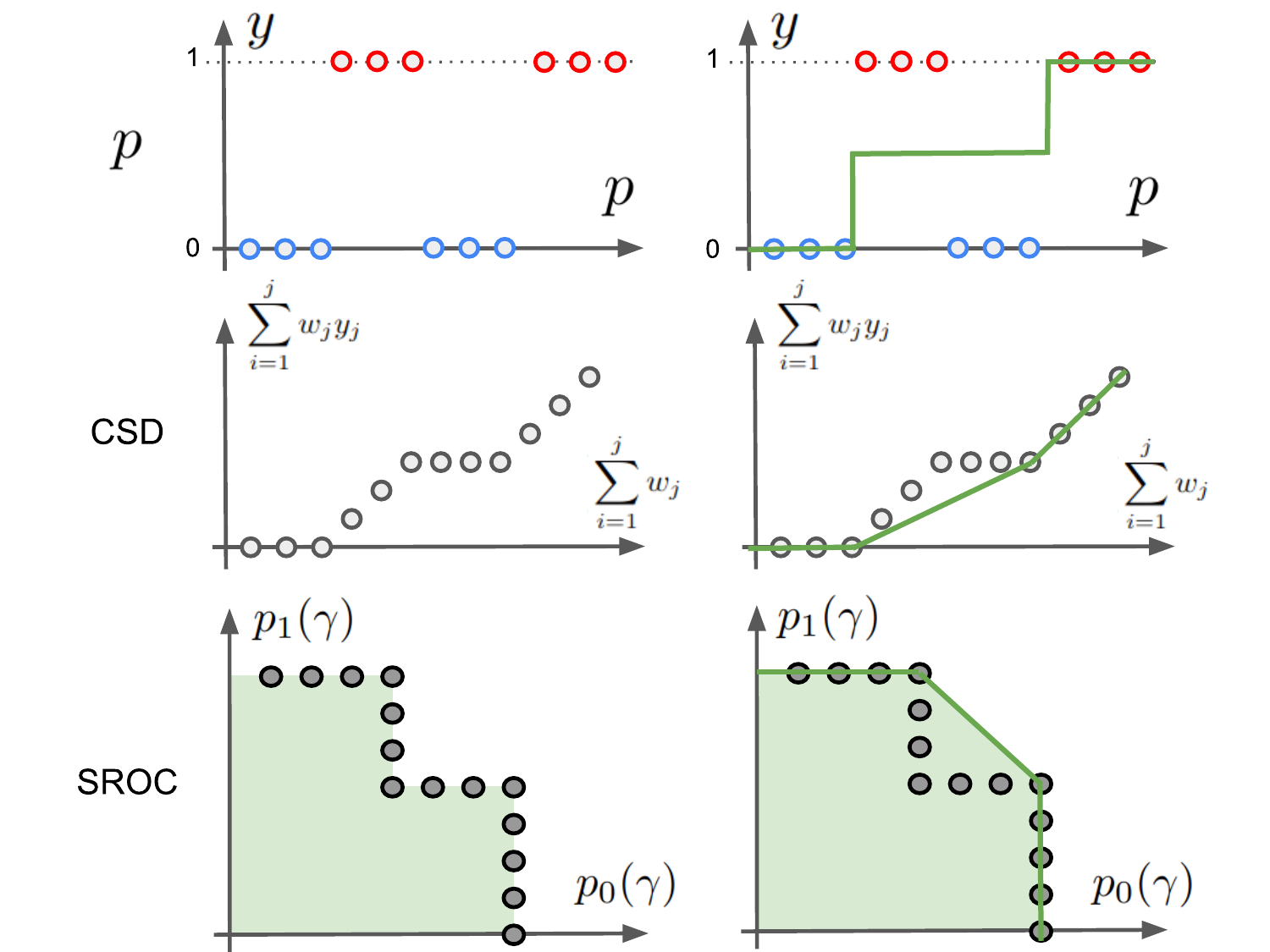}
    \vspace*{-.1cm}
    \caption{Illustrative problem with points spread across two classes \textit{blue} ($y=0$) and \textit{red} ($y=1$). \textbf{Left:} model predictions, CSD, SROC curve. \textbf{Right:} IR (equal to left derivative of the GCM), GCM of the CSD, SROC curve of IR (equal to the convex hull of the initial SROC curve).}
    \label{fig:IRROCConvex}
\end{figure}

A link between IR and the ROC convex hull algorithm was noted previously by \cite{fawcett_pav_2007}. To the best of our knowledge, our proof is the first that establishes this link formally.

IR minimizes the cross entropy on the calibration set but the monotony assumption acts as a regularizer that prevents the calibration function from improving performance further beyond the convex hull of the initial ROC curve. This regularization achieves an optimal trade-off by guaranteeing that we are not hurting performance of the initial model (the AUC is improved or preserved) and prevents overfitting of the calibration set. To illustrates this trade-off, we fit a logistic regression on the first two classes of the Covertype dataset \citep{covertype} and we calibrate our classifier with IR and a recursive binning scheme that makes no monotony assumption.
We fit IR using isotonic recursive partitioning (IRP) \citep{IRP, GIRP}, a recursive procedure that creates new regions in an iterative manner. We plot the cross entropy on the calibration set and on the test set depending on the number of bins created; see Figure \ref{fig:2DIRPvsBinning}. We see that unlike the standard binning procedure that overfits the calibration set when the grid gets too fine, the monotony regularization of IR prevents overfitting, and the algorithm stops when the cross entropy is minimized on the test set. Moreover, the extra freedom that IR can set adaptive bin boundaries results in lower cross entropy with fewer bins than for the standard binning procedure.

\begin{remark}
Standard IR on binary labels starts with a 0-valued bin and ends with a 1-valued bin which can cause the test cross entropy to be infinite in case of misclassification. We regularize IRP by adding Laplace smoothing when computing the means on each bin. This new regularized mean minimizes an entropy regularized cross entropy $H(p, y) - \lambda log(p)$ for some regularization strength $\lambda$ depending on the amount of Laplace smoothing. On the calibration set, we plot that regularized cross entropy, which is minimized by our algorithm. On the test set however, we plot the standard cross entropy.
\end{remark}

\begin{figure}
    \centering
    \includegraphics[scale=0.49]{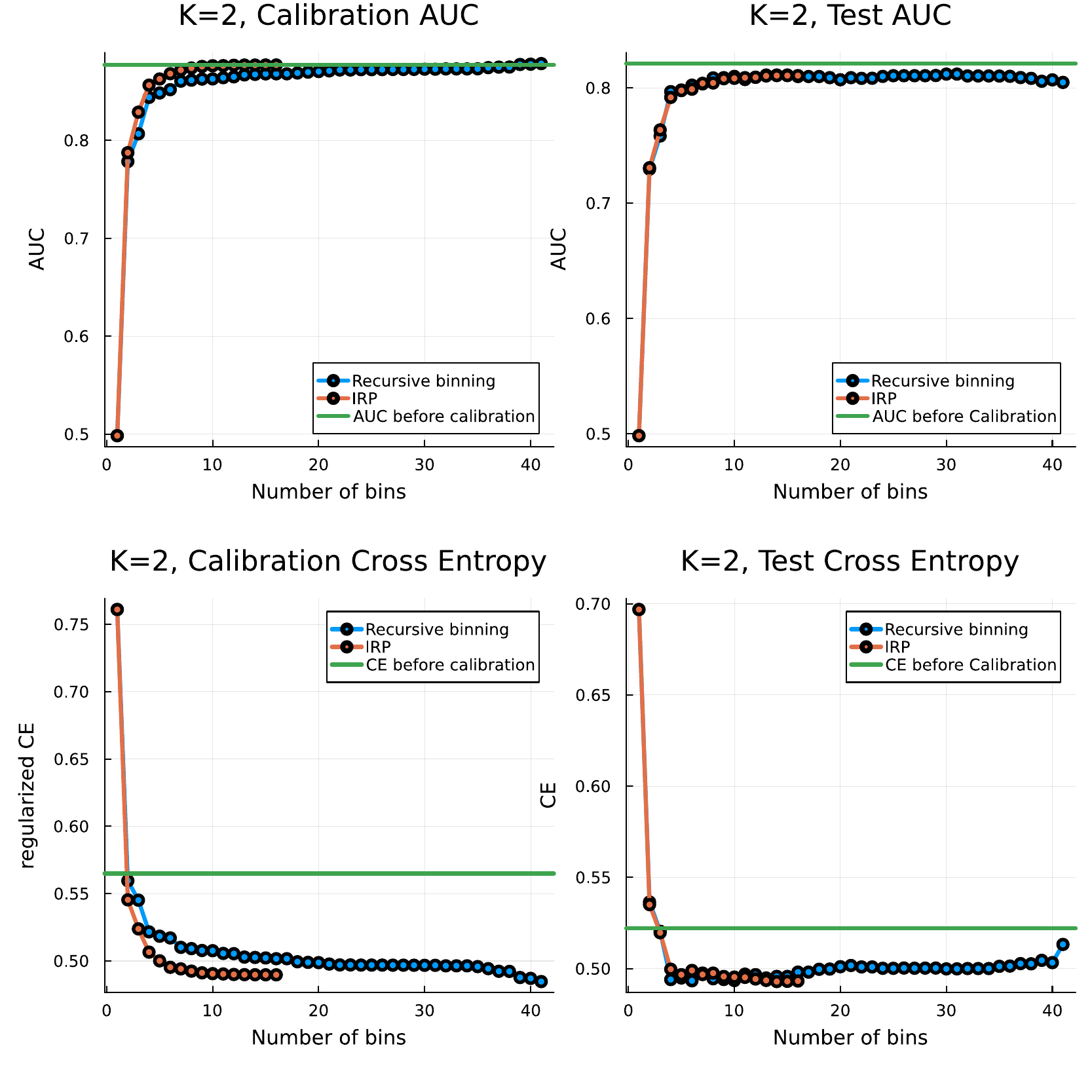}
    \vspace*{-.25cm}
    \caption{Calibration and test cross entropy and AUC, IRP versus nonmonotone recursive binning.}
    \label{fig:2DIRPvsBinning}
\end{figure}

\section{MULTI-CLASS IR}

The previous section presented some of the appealing properties of IR calibration in the binary setting. We now investigate the possibility of building a similar tool for the more general multi-class calibration setting. The definition we use for multi-class calibration requires that predictions are calibrated on every class. This definition is overly restrictive for problems with a large number of classes (typically $K > 5$), for which it is natural in practice to ask that the model is calibrated only on the top classes. For simplicity, we simply focus on low-dimensional classifiers in this paper and leave extensions to high-dimensional classifiers for future work.

Let $K \in \mathbb{N}, K \geq 3$. In the general $K$-class setting, we have $\mathcal{P} = \Delta_K$ and $\mathcal{Y} = \{0, 1, \cdots, K\}$. For convenience, we use the one-hot encoding of the labels $\mathcal{Y} = \Delta_K$.

\subsection{Multi-Class ROC Surface}
In the binary case, our increasing function naturally preserves the ordering of the initial forecasts, which leads us to conclude that it preserves the ROC curve of the initial classifier. In the multi-class setting, a similar notion of ordering is harder to define. Many definitions of multidimensional monotony exist and behave as different regularization hypothesis for our calibration function. To mimic the binary case, we are interested in preserving the ROC curve of the non-calibrated forecasts on the calibration set. To carry out this programme, we first require a definition of the ROC curve in any dimension.

Let $A_K = \{ x \in \mathbb{R}^K | \sum_{k=1}^K x_k = 1 \}$ denote an affine combination of the unit vectors in $\mathbb{R}^K$, and let $\gamma \in A_K$ denote a multi-dimensional threshold. In a similar fashion to the binary case, we can split $\Delta_K$ into $K$ regions, $R_1, R_2, \dots, R_K$, around $\gamma$ and define $K$ probabilities $p_1(\gamma) = \mathbb{P}(X \in R_1 | Y=1), \dots, p_K(\gamma) = \mathbb{P}(X \in R_K | Y=K)$. Varying $\gamma$ allows us to build a $K$-dimensional ROC surface. For a given $\gamma \in A_3$, Figure \ref{fig:splitSimplex} illustrates a natural symmetric splitting of the simplex~$\Delta_3$.

\begin{figure}[h]
    \centering
    \includegraphics[scale=0.25]{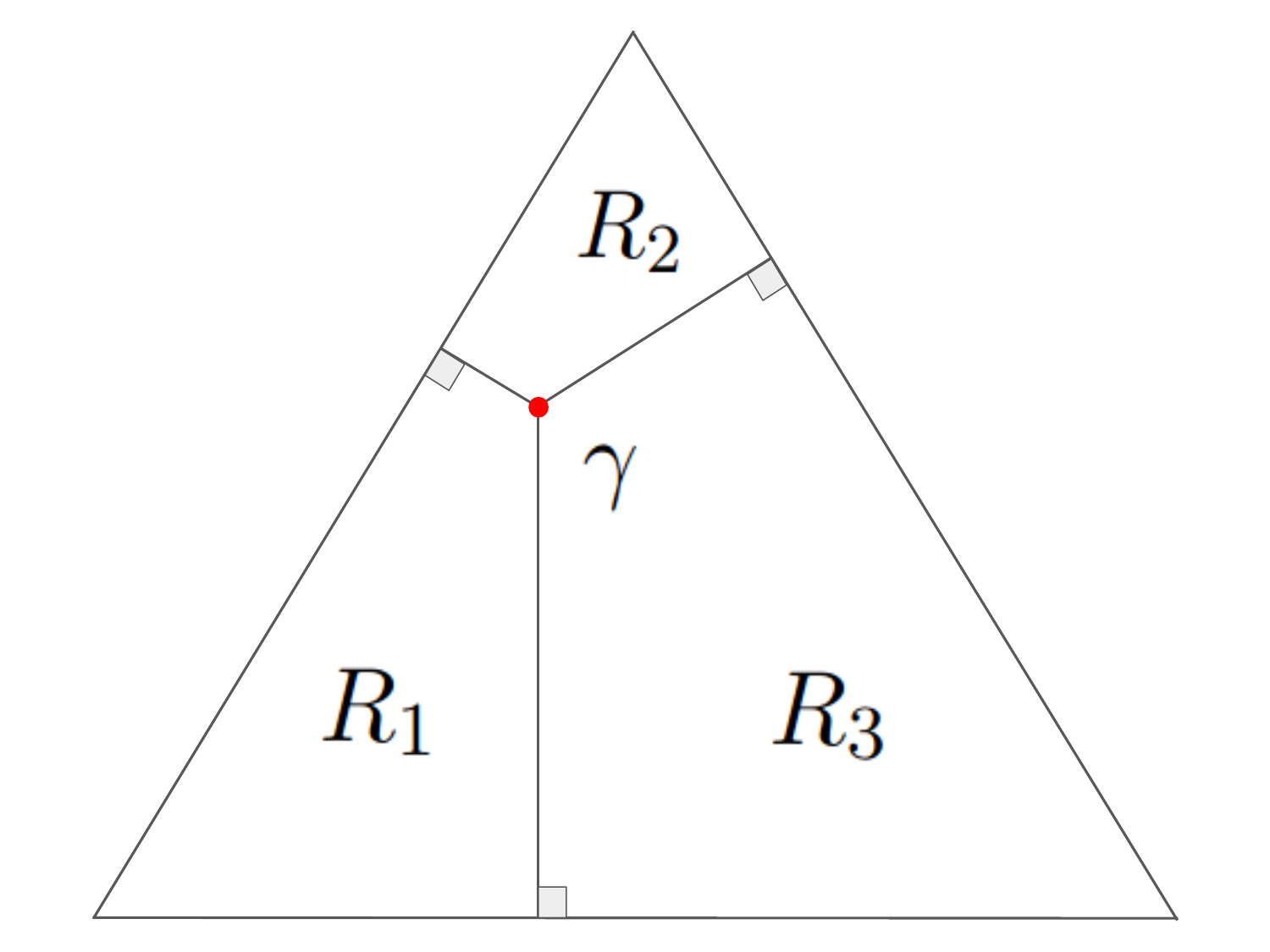}
    \vspace*{-.1cm}
    \caption{Natural splitting of the simplex $\Delta_3$ into class-specific regions $R_1$, $R_2$, $R_3$.}
    \label{fig:splitSimplex}
\end{figure}

This splitting strategy can be extended to build partitions of the simplex around any point $\gamma \in A_K$ in dimension $K$:
\begin{equation}\label{eq:SimplexSplitting}
    R_k = \{ r \in \Delta_K | \argmax_{\llbracket 1, K \rrbracket}(r-\gamma) = k \}  ,
\end{equation}
for all $k \in \llbracket 1, K \rrbracket$.
For any point $r \in \Delta_K$ and $\gamma \in A_K$, the vector $r-\gamma$ is necessarily associated with a maximum-valued axis $k$ such that $r_k - \gamma_k \geq r_i - \gamma_i$, for all $i \in \llbracket 1, K \rrbracket$. The boundaries correspond to ties in the argmax, and the ties can be broken with any strategy that ensures that each point belongs to only one region, such that (\ref{eq:SimplexSplitting}) defines a partition of the simplex.

We also define the subset $S_k$ of points $p$ that belong to region $R_k$ for a given split $\gamma$: $S_k(p, \gamma) = \{p_i \in R_k(\gamma)\}$.

Equipped with this partition of the simplex, we extend the standard definition of the ROC curve to an arbitrary dimension.

\begin{definition}[ROC surface] For a random experiment with outputs $Y \in \Delta_K$, we define the ROC surface of forecasts $P \in \Delta_K$ as the $K$-dimensional graph:
$$
    \big\{ \big( p_1(\gamma), p_2(\gamma), \dots, p_K(\gamma) \big) , \forall \gamma \in A_K \big\}  ,
$$ where $ p_k(\gamma) = \mathbb{P}\big(P \in R_k(\gamma) | Y=k \big)$, for all $k \in \llbracket 1, K \rrbracket$, and $R_k(\gamma)$ was defined above.
\end{definition}

\begin{remark}
    A technical subtlety is that we are using $\gamma \in A_K$ and not $\gamma \in \Delta_K$. In the binary case, taking $\gamma \in \Delta_2$ is enough to build the full ROC curve but this is not true in general. The splitting point must be allowed to take values in the affine plane outside the simplex. Without this additional freedom, for $K=3$ for example it would not be possible to put all the points in the same region, and the points $(0,0,1), (0,1,0), (1,0,0)$ would not belong to the ROC surface.
\end{remark}

This ROC surface illustrates how well our classifier can separate the $K$ classes in the data for any choice of multi-dimensional threshold $\gamma$. The volume under the ROC surface (VUS) can be computed in any dimension to provide an indication of the performance of a multi-class classifier.

\subsection{Generalized Monotony}

This extension of the ROC curve to arbitrary dimensions allows us to define a new monotony criterion that aims at preserving the ROC surface of the initial model. We seek to define constraints on the values of our multidimensional calibration function so that the ROC surface of the calibrated forecasts $r$ is the same as the ROC surface of non-calibrated forecasts $p$. In the binary case, each possible threshold $\gamma \in [0,1]$ generates a split between points $S_0(r, \gamma)$ and $S_1(r, \gamma)$. The fact that the function is monotone guarantees that the same partition of the samples can be found with another split on the non-calibrated forecasts. That is, for all $ \gamma \in [0,1]$, there exists $\gamma' \in [0,1]$ such that $(S_0(p, \gamma'), S_1(p, \gamma')) = (S_0(r, \gamma), S_1(r, \gamma))$, with $\gamma \neq \gamma'$.
\vskip6pt
\begin{remark}
    This property is not reciprocal as IR is not strictly monotone. IR merges values of consecutive points together, deleting a possible split in the calibrated function. This removes a point from the ROC curve, which explains that the ROC curve after calibration contains fewer points than the ROC curve before calibration. IR is optimal as it keeps only the points that form the convex hull of the ROC curve.
\end{remark}

In a similar fashion, we want the splits that we can make on our calibration function to exist also in the non-calibrated forecasts. In other words, the points that we allow on the calibrated ROC surface are the points from the non-calibrated ROC surface.

\begin{definition}[ROC monotony]
    Let $p = (p_i)_{i \in \llbracket1, n\rrbracket}$ denote non-calibrated forecasts and $r = (r_i)_{i \in \llbracket1, n\rrbracket}$ the image of these forecasts through our calibration function. Our function is said to be \textit{ROC monotone} if
    $$
    \forall \gamma \in A_K, \exists \gamma' \in A_K | \: S_k(r, \gamma) = S_k(p, \gamma'), \forall k \in \llbracket 1, K \rrbracket  .
    $$
\end{definition}
As for the binary case we will average labels on bins, which will delete many points from our initial ROC surface. Many of theses points are sub-optimal (not on the ROC convex hull), so our method should choose to preserve optimal points to preserve the convex hull of the initial ROC surface.

\subsection{Recursive Splitting Algorithm}

We need to split the $K$-dimensional simplex into a finite set of bins to guarantee calibration. On each of these bins, the value of our calibration function will be the mean label for the samples of the calibration set that fall into the bin. A simple idea is to start with a constant function on the simplex and recursively split it into smaller regions. Every time we make a new split, we recompute the value of our function on the newly defined regions by taking the mean of the labels from the calibration set for the points that fall in each of these regions. This procedures guarantees that our function stays calibrated.

We also need to enforce our ROC monotony criterion. Every time we make a new split on the simplex, we can make sure that our function is still monotone, and otherwise reject the split. ROC monotony gives us a natural way to split the simplex, recursively employing the orthogonal split  that we defined earlier in (\ref{eq:SimplexSplitting}). After a split, we only need to check the label's means in the $K$ new regions to make sure that the function is still ROC monotone. The algorithm we just described is very similar to IRP, that solves IR in the binary case. We thus adopt the same splitting strategy as in the standard IRP. Given a region $R$ we select the optimal splitting point $\gamma \in R$ by solving:
$$ 
M_R(\gamma) = \max_{\gamma \in R} \sum_{k=1}^K \#S_k(\gamma) |\Bar{y}_R - \Bar{y}_{R_k(\gamma)}| ,
$$ with $\Bar{y}_B$ the mean label for samples falling in bin $B$.

The algorithm converges when it finds no split that leaves the function ROC monotone in any region. At each iteration, we split the region with the largest $M_R(\gamma)$. The resulting Algorithm \ref{alg:GIRP} works in any dimension. For $K=2$ it coincides with IRP and solves IR. For $K\geq3$ it builds a multi-dimensional adaptive ROC preserving binning scheme. To our knowledge, this is the first method that provides multi-class calibration guarantees without resorting to regular binning schemes.

\begin{algorithm}
\caption{multi-class IRP}\label{alg:GIRP}
\begin{algorithmic}
\Procedure{\textbf{split}}{$R, p, r, y$}
    \State $\textit{splitfound} \gets \textbf{False}$
    \State $M \gets 0$
    \For{$\gamma \in R$}
        \State $\forall k, R_k \gets R_k(\gamma)$ \Comment{Compute split}
        \State $\forall k, S_k \gets S_k(\gamma)$ \Comment{Compute split}
        \State $\forall k, \forall p_i \in S_k, \Hat{r}_i = \Bar{y}_{S_k}$ \Comment{Compute split}
        \If{$\Hat{r}$ \textit{ROC monotone} and $M(\gamma) > M$}
            \State $r \gets \Hat{r}$ \Comment{Update function}
            \State $M \gets M(\gamma)$ \Comment{Update max}
            \State $\textit{splitfound} \gets \textbf{True}$ \Comment{Update status}
        \EndIf
    \EndFor
\EndProcedure
\\
\State $r \gets y$ \Comment{Initialize calibration function}
\State $\textit{regions} \gets [\Delta_K]$ \Comment{Initialize regions list}
\While{$\#\textit{regions} > 0$} \Comment{Recursive splitting}
    \State $\textit{bestsplit} \gets \argmax_\textit{regions}(M)$
    \State $R \gets \textbf{popat}(\textit{regions}, \textit{bestsplit})$
    \State $\textit{splitfound}, \hat{r}, R_1, \dots, R_K \gets \textbf{split}(R, p, r, y)$
    \If{$\textit{splitfound}$}
        \State $r \gets \hat{r}$ \Comment{Update calibration function}
        \State $\textit{regions} \gets \textbf{push}(\textit{regions}, [R_1, \dots, R_K])$
    \EndIf
\EndWhile
\end{algorithmic}
\end{algorithm}

\begin{remark}
    In practice, we evaluate ROC monotony only on the splitting points we introduced and not on the full simplex. This means that all the splits we create correspond to points from the initial ROC surface. Artifacts of the multidimensional space make full ROC monotony too restrictive for any split to exist.
\end{remark}
\begin{remark}
    The original IRP can be solved exactly, with the optimal partition of a region found by solving a linear program. We run our algorithm by choosing splitting points on a grid.
\end{remark}
\begin{remark}
    As in the binary case, we use Laplace smoothing when computing the region means.
\end{remark}

The result of our algorithm is illustrated for $K=3$ and $K=4$ in Figure \ref{fig:3DCalibrationFunction} and Figure \ref{fig:4DCalibrationFunction} in the appendix. In Figure \ref{fig:recursiveSplittingROCGraph} we plot the non-calibrated and calibrated ROC surfaces obtained for the three-class problem. As expected, the surface of our calibrated function contains far fewer points that the initial ROC surface, but these points belong to the initial ROC surface. Our algorithm seems to make our calibration function optimal in the sens that our calibrated ROC surface covers the initial ROC surface.

On the three and four, respectively, top classes of the Covertype UCI dataset \citep{covertype}, we fit a logistic regression classifier that we calibrate with multi-class IRP and a non-regularized recursive binning scheme. Figure \ref{fig:3DIRPvsBinning} and Figure \ref{fig:4DIRPvsBinning} show that, as in the binary case, IRP finds a sweet spot between overfitting the calibration set and sacrificing model performance. Our monotony criterion guarantees that the calibration VUS is majorized by the initial VUS of our classifier. Unlike the binary case, our calibration function does not necessarily reach that upper bound. Still, we see empirically that our adaptive binning outperforms regular binning in terms of bin efficiency. Moreover, as in the binary case, our algorithm naturally stops when the test cross entropy is minimized. This illustrates the efficiency of our multi-class ROC monotony regularization.

\begin{figure}[H]
    \centering
    \includegraphics[scale=0.48]{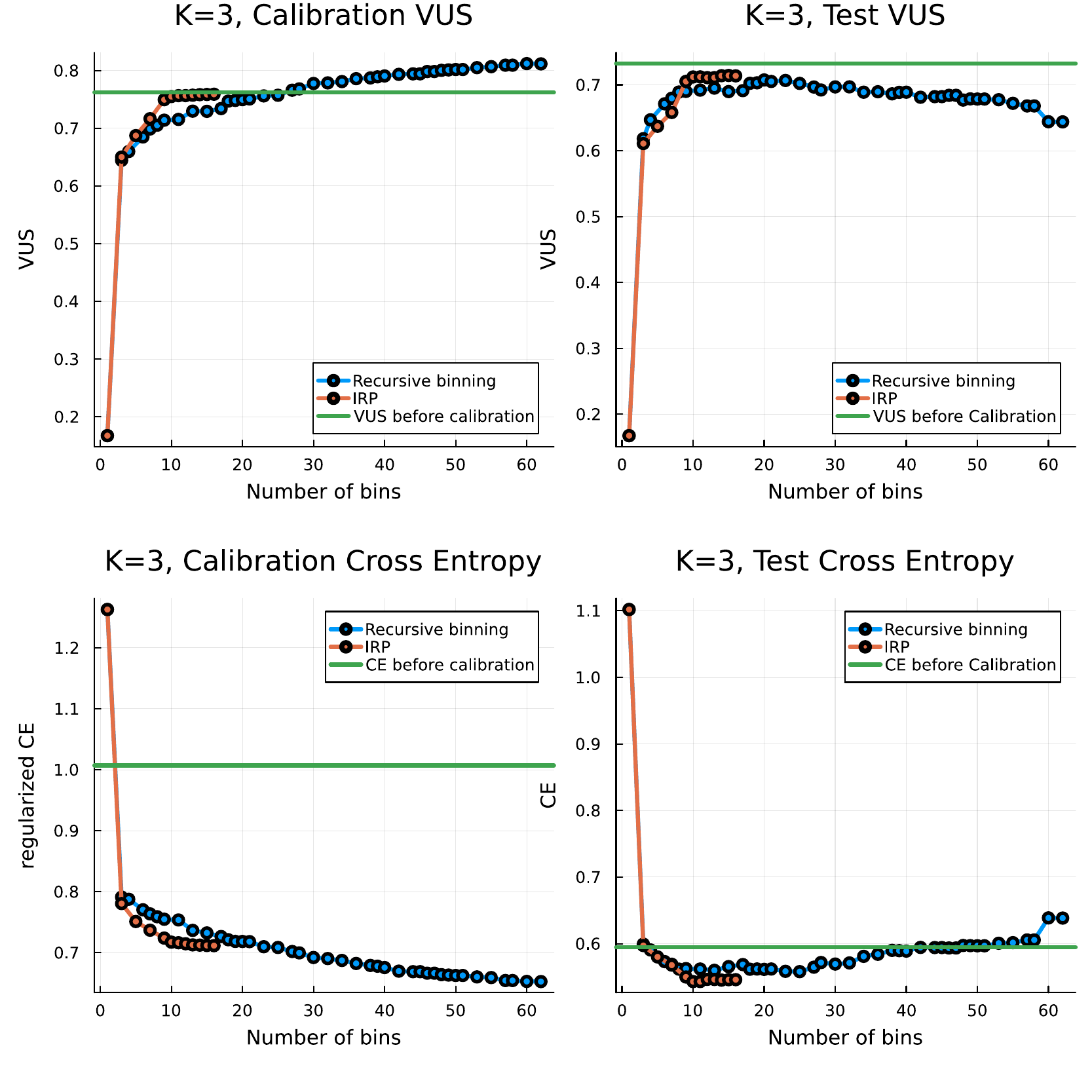}
    \vspace*{-.25cm}
    \caption{For $K=3$, calibration and test cross entropy and VUS, IRP versus nonmonotone recursive binning.}
    \label{fig:3DIRPvsBinning}
\end{figure}

\begin{figure}[H]
    \centering
    \includegraphics[scale=0.48]{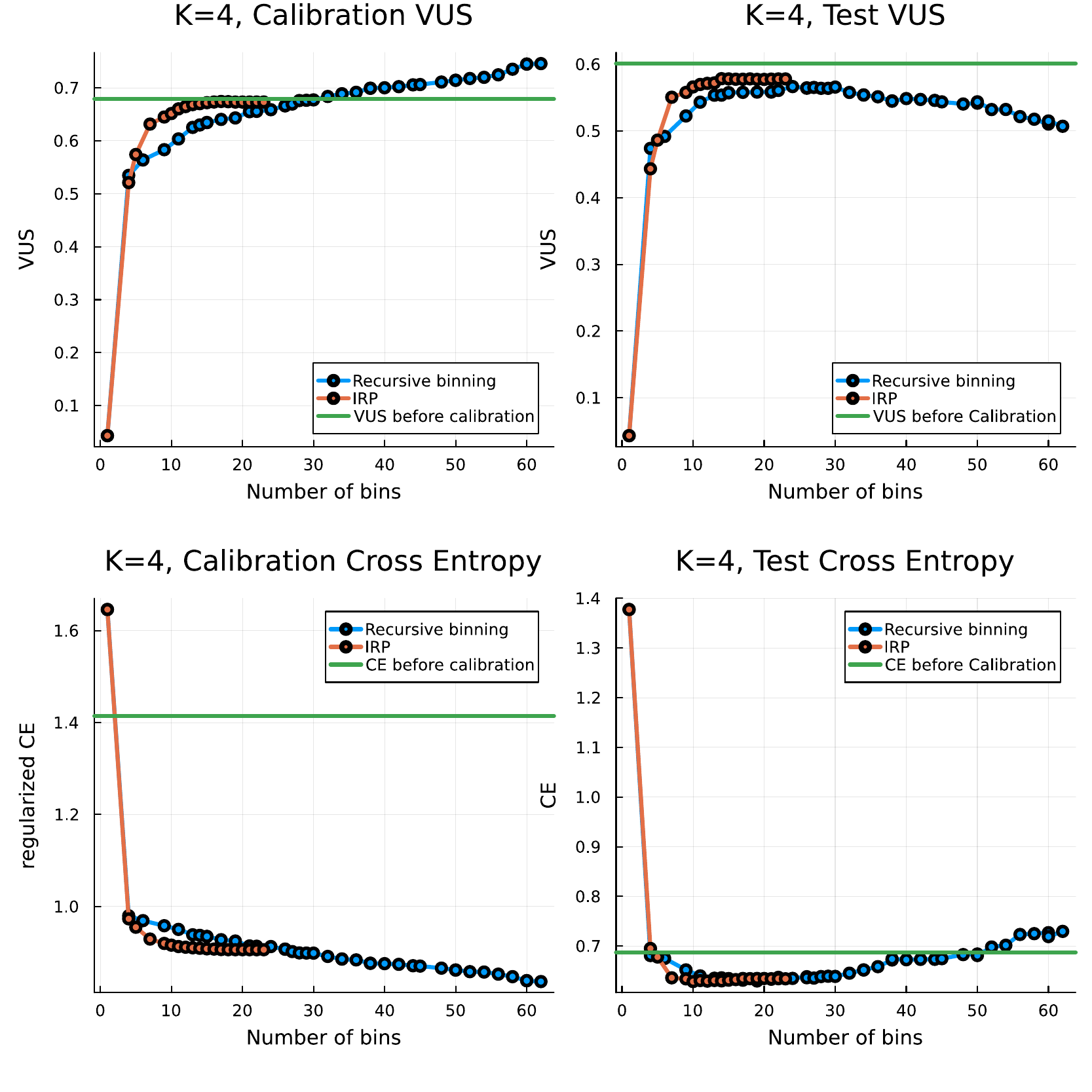}
    \vspace*{-.25cm}
    \caption{For $K=4$, calibration and test cross entropy and VUS, IRP versus nonmonotone recursive binning.}
    \label{fig:4DIRPvsBinning}
\end{figure}

\section*{Acknowledgements}
We   acknowledge support from the French government under the management of the Agence Nationale de la Recherche as part of the ``Investissements d’avenir'' program, reference
ANR-19-P3IA-0001 (PRAIRIE 3IA Institute).

\bibliography{bibliography}

\newpage

\appendix

 \section{Additional figures}
 
Figure \ref{fig:3DCalibrationFunction} illustrates results for the three-class IRP Algorithm \ref{alg:GIRP} on a synthetic dataset presented in the top-left corner of the figure. The non-calibrated predictions are generated by a uniform distribution of points on the three-dimensional simplex. The corresponding labels are chosen to be the argmax of the predictions plus some with noise, the labels are represented on the figure by the color of the dots. We represent the calibration function obtained by setting the color of the points to be the value of the three-dimensional function in RGB (top right corner). On the bottom line, we represent the splits made by our algorithm on the simplex and the resulting regions obtained, with the value of the region corresponding to the mean of the labels on each region, represented again by the RGB color.

\begin{figure}[h]
    \centering
    \includegraphics[scale=0.7]{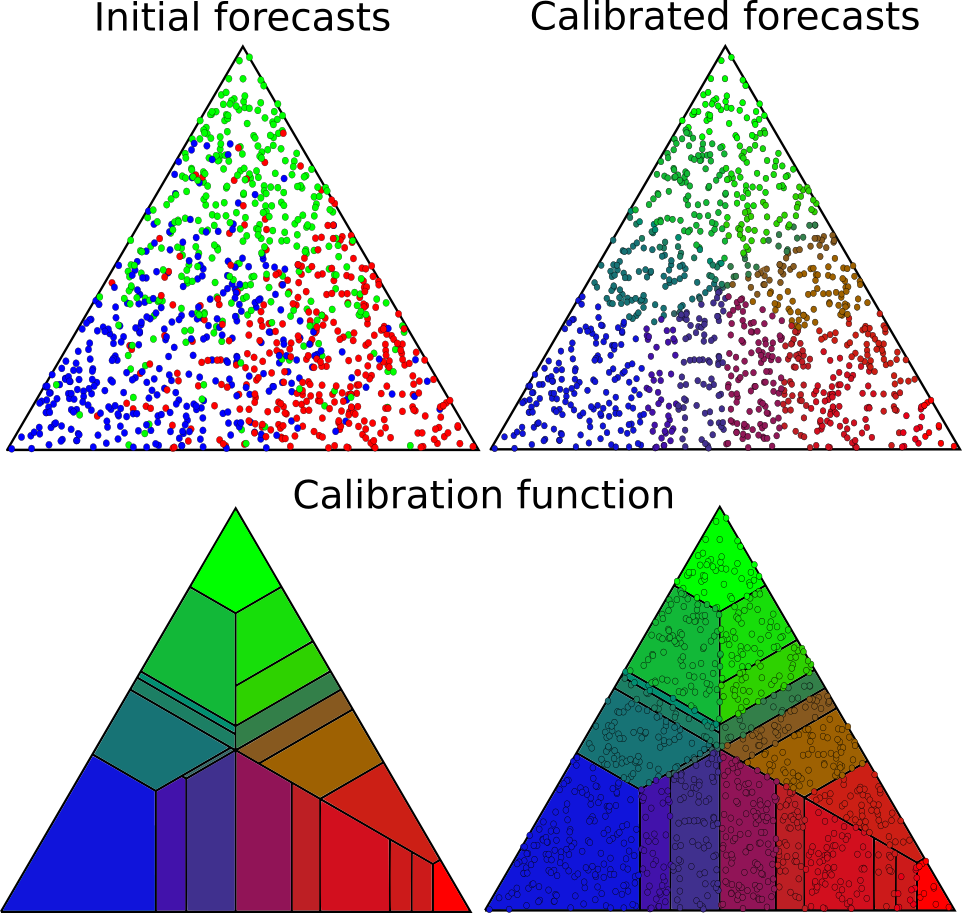}
    \caption{Multi-class IRP on a three-class synthetic calibration set.}
    \label{fig:3DCalibrationFunction}
\end{figure}

Figure \ref{fig:recursiveSplittingROCGraph} displays the resulting three-dimensional ROC surfaces obtained before and after calibration.
\begin{figure}[h]
    \centering
    \includegraphics[scale=0.32]{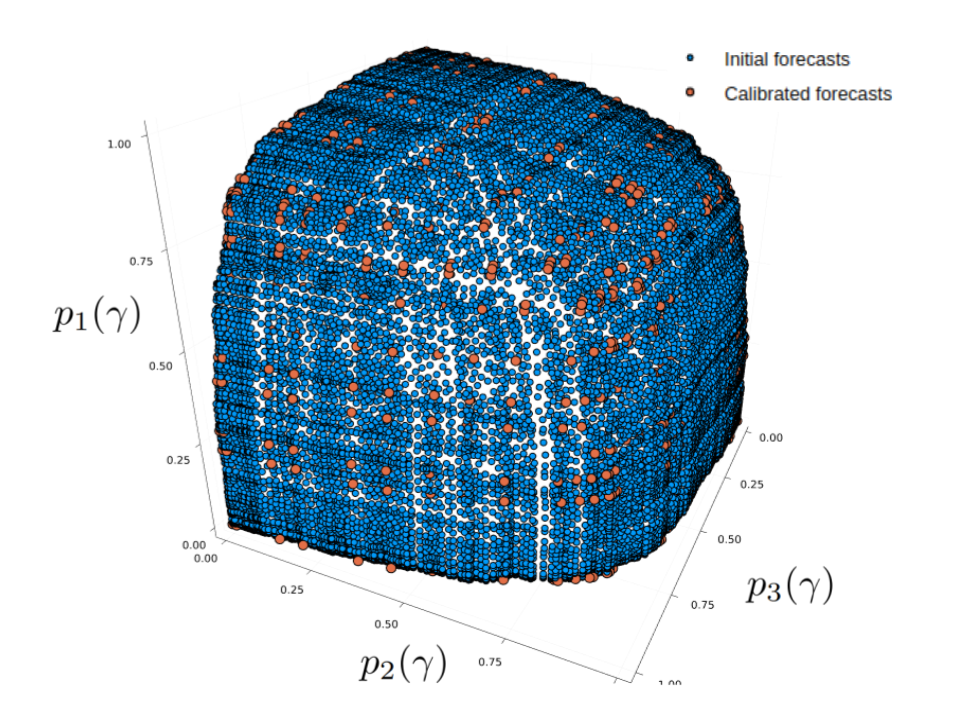}
    \caption{Initial ROC surface (\textbf{blue dots}) and calibrated ROC surface (\textbf{orange dots}) after multi-class IRP on a 3-class synthetic calibration set.}
    \label{fig:recursiveSplittingROCGraph}
\end{figure}

Figure \ref{fig:4DCalibrationFunction} illustrates the result of the four-class IRP Algorithm \ref{alg:GIRP} on the output of a logistic regression classifier trained on the first four classes of the Covertype UCI dataset \citep{covertype}. The four-dimensional simplex is plotted as the regular pyramid in three dimensions.

\begin{figure}[t]
    \centering
    \includegraphics[scale=0.45]{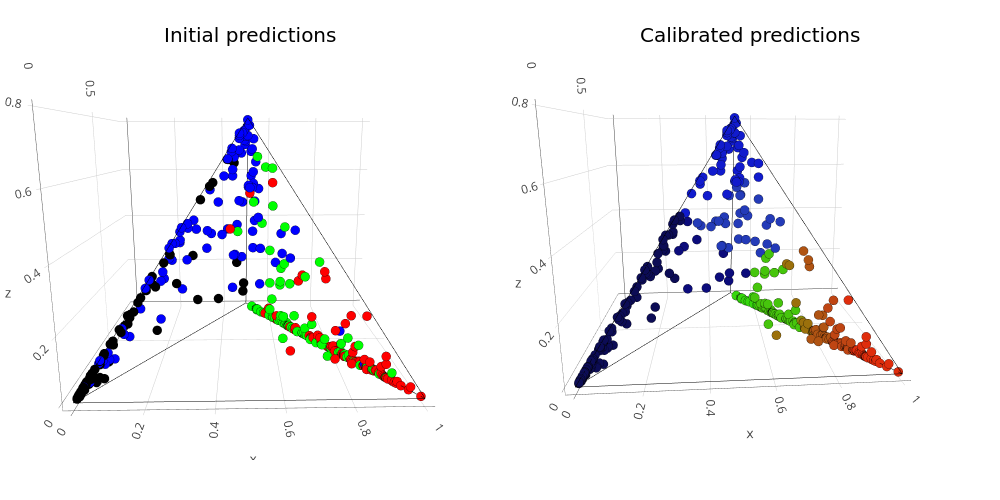}
    \caption{Multi-class IRP on a 4-class calibration set.}
    \label{fig:4DCalibrationFunction}
\end{figure}

  \end{document}